\newtheorem{theorem}{Theorem}[section]
\icmltitlerunning{Fairness Preferences, Actual and Hypothetical: A Study of Crowdworker Incentives}
\begin{document}

\twocolumn[
\icmltitle{Fairness Preferences, Actual and Hypothetical: A Study of Crowdworker Incentives}

% It is OKAY to include author information, even for blind
% submissions: the style file will automatically remove it for you
% unless you've provided the [accepted] option to the icml2020
% package.

% List of affiliations: The first argument should be a (short)
% identifier you will use later to specify author affiliations
% Academic affiliations should list Department, University, City, Region, Country
% Industry affiliations should list Company, City, Region, Country

% You can specify symbols, otherwise they are numbered in order.
% Ideally, you should not use this facility. Affiliations will be numbered
% in order of appearance and this is the preferred way.
%\icmlsetsymbol{equal}{*}

\begin{icmlauthorlist}
\icmlauthor{Angie Peng}{to}
\icmlauthor{Jeff Naecker}{to}
\icmlauthor{Ben Hutchinson}{to}
\icmlauthor{Andrew Smart}{to}
\icmlauthor{Nyalleng Moorosi}{to}
\end{icmlauthorlist}

\icmlaffiliation{to}{Google AI, Mountain View, CA, USA}

\icmlcorrespondingauthor{Angie Peng}{apeng@google.com}

% You may provide any keywords that you
% find helpful for describing your paper; these are used to populate
% the "keywords" metadata in the PDF but will not be shown in the document
\icmlkeywords{Machine Learning, ICML}

\vskip 0.3in
]

% this must go after the closing bracket ] following \twocolumn[ ...

% This command actually creates the footnote in the first column
% listing the affiliations and the copyright notice.
% The command takes one argument, which is text to display at the start of the footnote.
% The \icmlEqualContribution command is standard text for equal contribution.
% Remove it (just {}) if you do not need this facility.

%\printAffiliationsAndNotice{}  % leave blank if no need to mention equal contribution
\printAffiliationsAndNotice{\icmlEqualContribution} % otherwise use the standard text.

\begin{abstract}
How should we decide which fairness criteria or definitions to adopt in machine learning systems? To answer this question, we must study the fairness preferences of actual users of machine learning systems. Stringent parity constraints on treatment or impact can come with trade-offs, and may not even be preferred by the social groups in question \cite{zafar2017parity}. Thus it might be beneficial to elicit what the group’s preferences are, rather than rely on {\em a priori} defined mathematical fairness constraints. Simply asking for self-reported rankings of users is challenging because research has shown that there are often gaps between people’s stated and actual preferences\cite{bernheim2013non}.

This paper outlines a research program and experimental designs for investigating these questions. Participants in the experiments are invited to perform a set of tasks in exchange for a base payment---they are told upfront that they may receive a bonus later on, and the bonus could depend on some combination of output quantity and quality. The same group of workers then votes on a bonus payment structure, to elicit preferences. The voting is hypothetical (not tied to an outcome) for half the group and actual (tied to the actual payment outcome) for the other half, so that we can understand the relation between a group’s actual preferences and hypothetical (stated) preferences. Connections and lessons from fairness in machine learning are explored.

\end{abstract}

\section{Introduction}
\label{section:Introduction}
The increasingly large-scale deployments of AI systems on global populations require large quantities of resources and coordination \cite{crawford2018anatomy, amodei31ai}, which are often achieved through centralized and top-down decision-making processes.
In doing so, the diversity of stakeholders and their perspectives are often given short shrift.
Indeed, even when deployment objectives include moral values such as fairness, we often see first principles-based approaches, which aim to proceed from abstract ideals.  
However, stringent top-down constraints on treatment or impact can come with trade-offs, and these constraints may not even be preferred by the groups most affected by the system. \cite{zafar2017parity}.

What if instead, the fairness properties of a machine learning system were engineered to better incorporate the preferences of its users?
%As a result, it might be beneficial to elicit what each group’s preferences are rather than rely on {\em a priori} defined mathematical fairness constraints. 
Answering this question is important because qualitative work has shown that perceptions of algorithmic unfairness among technology users negatively affects their trust in a company or product \cite{woodruff2018qualitative}.
We argue that to design ML for stakeholders, we must 1) understand dimensions of variation of their preferences and 2) design social choice mechanisms for aggregating diverse individual preferences to reach a collective decision \cite{arrow1951social}.
While there have been some studies seeking to understand people's preferences regarding ML models \cite{grgic2018human, harrison2020empirical, binns2018s, jahanbakhsh2017you,  saxena2019, srivastava2019mathematical},
most previous work on eliciting judgments around fairness have typically focused on qualitative perceptions \cite{woodruff2018qualitative}, hypothetical scenarios \cite{lee2018understanding}, or actual scenarios in a hypothetical setting \cite{harrison2020empirical}. 
In contrast, our goal is to measure fairness preferences for the same situation in \emph{both} a hypothetical and actual setting in order to measure the gap between stated and actual preferences. Questions of preferences are challenging because research has shown that there are often gaps between people’s stated and actual preferences\cite{samuelson1948consumption}.
Quantifying the relationship between stated and real preferences then allows us to quickly and more efficiently make predictions about actual preferences from survey data.

Studying preferences in this area is particularly challenging for another reason: research has showing that people can reverse their preferences depending on the context, violating some common modeling assumptions, such as time consistency of preferences.
In the specific context of considering ``fairness'', \citet{andreoni2018} have shown that whether the preference between equality of opportunity ({\em ax ante} fairness) and equality of outcomes ({\em ex post} fairness) may not even be stable within individuals.
Thus when eliciting preferences in this area, we must be careful to consider how luck and randomness play a role.\textbf{}
%Social choice is the study of how to combine individual preferences to reach a collective decision \cite{arrow1951social}. 

In this paper, we present an experimental design which aims to elicit preferences from an often-overlooked stakeholder group: the crowdworkers who annotate ML data sets \cite{gray2019ghost,ross2010crowdworkers}. Given the structural correspondences between pay incentive structures and ML systems (which we'll describe in Section \ref{section:fairpay}), these experiments also serve as a useful proxy for discussing perspectives on fairness in ML more generally.

This paper is structured as follows. We first provide a brief overview of the structural relationships between fairness in the context of crowdworker incentives and fairness theorems in machine learning. These relationships motivate the design of the experiment which we proceed to outline in Section \ref{section:experiment}, including ethical considerations in the design. We then describe how the experimental results will be used to shed new light on relationships between stated and actual preferences around ML model deployments, and conclude with a discussion of a larger research agenda that could build from these results.

\section{Fair Pay/Work Criteria}
\label{section:fairpay}
The crowdworkers who label the datasets powering ML systems are a critical part of the ML economy, and as such their employment conditions are deserving of consideration \cite{gray2019ghost,ross2010crowdworkers}. Concerns about crowdworkers are furthermore situated within a much bigger societal conversation about how fair and equitable pay might be determined for different types of work. One pertinent question is how the aggregate value paid to a workforce should be divided between the individual workers.\footnote{This paper does not consider the question of {\em how much} of the value of the collective work should be returned to the workers.}
%Historically, and across societies, answers to this question have appealed to a number of criteria, including qualifications, ability, "market forces", tenure, needs, deservedness, membership of a wide variety of social groups, etc \cite{}.
In this paper we focus on the relationship of pay to the measurable durations and outputs of work. 
That is, for the sake of this paper, we suppose that the fairness of pay could be linked to the inputs of the labor process, such as time and intensity spent on work, as well as the quality and quantity of the outputs. 
Notions of {\em fair pay} under consideration may appeal to any or all of these quantities. 
In some cases, input and output might be highly correlated, but in other cases difference in technology -- or simply luck -- may disrupt this relationship.
%and our notion of ``outcomes'' includes factors that may be beyond the workers control. For example, a baker may efficiently bake a lot of bread (outputs) but if no loaves sell then that is a poor outcome.
% Table \ref{tab:variables} summarizes our variables representing these components of the system.

% \begin{table}[]
%     \centering
%     \begin{tabular}{p{1.1cm}p{2.6cm}p{3.1cm}}
% \toprule
% \bf Variable     & \bf Dependent on             & \bf Notes
% \\\bottomrule
% \multirow{2}{*}{$Time$}& Worker, employer & \multirow{2}{*}{More observable}
% \\
% &and other factors & 
% \\\hline
% \multirow{2}{*}{$labor$}       & Time, worker & \multirow{2}{*}{Often less  observable} \\
% & and other factors &   
% \\\hline
% \multirow{2}{*}{$Outputs$}      & labor,  & \multirow{2}{*}{More often  observable}
% \\
% & technology, etc  & \\\hline
% %Outcomes & Outputs, other people, etc % eg collaborators, competitors, decision-makers,market-forces, etc
% %& Sometimes beyond the control of the worker
% %\\\hline
% \multirow{2}{*}{$Group$}        & Social definitions and processes       & Often beyond the control  of the worker
% \\\hline
%     \end{tabular}
%     \caption{Variables in the simplistic consideration of fair pay}
%     \label{tab:variables}
% \end{table}

\begin{table}[ht]
    \centering
    \small
    \renewcommand{\arraystretch}{1.5}
    \begin{tabular}{p{2.8cm}p{2cm}p{2.5cm}}
    % \begin{tabular}{lll}
    \toprule
    \bf Pay criterion & \bf ML criterion & \bf ML terminology \\
    \midrule
%    Pay should not use demographic group as a consideration.
%    & Demographic blindness  \cite{}
%    \\\hline
%    If people produce similar amount of work labor, then %they should be paid similar amounts.
%    & Individual fairness \cite{}
%    \\\hline
    $Group \perp Pay$
    & $Group \perp \widehat{Y}$ & Demographic parity 
    \\\hline
    $Group \perp Pay | \widehat{labor}$
    & $Group \perp \widehat{Y} | Y$ & Equal opportunity
\\\hline
    $Group \perp  \widehat{labor} | Pay $
    & $ Group \perp Y | \widehat{Y}$ & Sufficiency 
%\\\hline 
%Counterfactual fairness (TODO or omit?) & TODO or omit?
%     \\\bottomrule
%     $Pay \perp  \widehat{labor} | Outputs $
%     & ---
% \\\hline
%     $Pay \perp  Outputs | \widehat{labor} $
%     & ---
    \\\bottomrule
    \end{tabular}
    \caption{Comparison of a selection of possible fairness criteria in the ML and pay domains. ``Hat''-notation ($\hat{\cdot}$) denotes the estimate of an actual value. Labor can refer to a combination of input and output variables (e.g. complexity of problems given and quantity of tasks completed).
}
    \label{tab:analogs}

\end{table}

A direct parallel can be made between fairness in pay and machine learning fairness. In particular, Table~\ref{tab:analogs} outlines some structural relationships between different possible criteria of fairness in the domains of pay and machine learning. We claim that these definitions model several important dimensions of fairness of pay/work, such as ``is it unfair if I am paid less than someone who does the same work as me?'' (second line of the table), and ``is it unfair if I do more work than someone who is paid the same as me?'' (third line of the table). 

One complicating factor is that, in many scenarios, labor can be hard or impossible to measure accurately.
%For example, consider the work of two criminal defense lawyers each of whom has two clients. Suppose the first lawyer has one client convicted for 5 years and the other found not guilty, while the second lawyer enters plea bargains and gets 2 year terms for each client. It is challenging to conceptualize how the labor, outputs and outcomes of these two lawyers might be compared and measured.
We assert that this actually directly parallels the situation in many machine learning tasks, in which the appropriate sociotechnical framing of fairness is not just complex and contextual  \cite{selbst2019fairness}, but for which the question of construct measurement is critical \cite{jacobs2019measurement, jacobs2020meaning}.

With these considerations in mind, it is instructive to reconsider the so-called ``group fairness impossibility'' theorems of recent years \cite{pleiss2017fairness, kleinberg2016inherent, chouldechova2017fair}.
%In particular, it is important to recall that the incompatibility results require that either of the following two criteria be met: i) that groups have different distributions, or ii) that ML models do not have perfect accuracy.
In the fair pay scenario, we have the following directly analogous result.

\begin{theorem}[Fair Pay Impossibility Result]
 Suppose that pay is calculated as a function of the measured outputs of work, but that these outputs are not a perfect proxy for labor. Furthermore suppose that different groups have different distributions of pay and/or labor. Then it is impossible for the following to both hold: $Group \perp Pay | labor$ and $Group \perp labor | Pay$. Similarly, it is impossible  for the following to both hold: $Pay \perp labor | Outputs$ and $Pay \perp Outputs | labor$.
\end{theorem}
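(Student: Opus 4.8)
The plan is to observe that, after the dictionary in Table~\ref{tab:analogs}, this is the pay-domain incarnation of the familiar fact that \emph{separation} and \emph{sufficiency} cannot hold simultaneously, and to prove it with the elementary algebra of conditional independence (the graphoid axioms) rather than any domain-specific computation. Write $G$ for group, $P$ for pay, $L$ for labor, and $O$ for measured outputs; by hypothesis $P=f(O)$ for a deterministic $f$, the map between labor and outputs is genuinely noisy (outputs are an imperfect proxy for labor), and ``different groups have different distributions of pay and/or labor'' is the statement $G\not\perp P$ or $G\not\perp L$.

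For the first pair, suppose both $G\perp P\mid L$ and $G\perp L\mid P$ hold. These are exactly separation and sufficiency for the ``predictor'' $P$ of the ``label'' $L$ with group attribute $G$. Applying the intersection property of conditional independence, $G\perp P\mid L$ together with $G\perp L\mid P$ yields $G\perp(P,L)$, and then decomposition gives $G\perp P$ \emph{and} $G\perp L$: every group has the same pay distribution and the same labor distribution, contradicting the hypothesis. The intersection property is the one graphoid axiom that is not free --- it needs a positivity/support condition --- and this is precisely where ``outputs are not a perfect proxy for labor'' enters, playing the role that ``the predictor is not perfect'' plays in \citet{kleinberg2016inherent} and \citet{chouldechova2017fair}: if outputs (hence pay) pinned down labor, the conditioning events would degenerate and the argument would correctly fail.

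For the second pair the same machinery applies with $(P,L,O)$ in place of $(G,P,L)$: $P\perp L\mid O$ and $P\perp O\mid L$ give $P\perp(L,O)$, hence $P\perp O$; but $P=f(O)$ is a function of $O$, so $P\perp O$ forces $P$ to be almost surely constant, contradicting the premise that pay genuinely varies with output (and that group pay distributions differ). In fact this pair is even more immediate: since $P=f(O)$, conditioning on $O$ freezes $P$, so $P\perp L\mid O$ holds automatically and the whole content is that $P\perp O\mid L$ cannot hold --- holding labor fixed, outputs still vary (imperfect proxy) and $f$ transmits that variation to pay, so $P\not\perp O\mid L$.

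The main obstacle is bookkeeping around the escape hatches: one must state ``imperfect proxy'' and ``different group distributions'' in a form strong enough to license the intersection axiom (e.g.\ a strictly positive joint law on the relevant supports, or in the discrete setting that every joint cell of interest has positive mass) while weak enough to remain plausible in the pay context. If one prefers not to assume a blanket positivity condition, the fallback is the concrete route of \citet{chouldechova2017fair}: specialize to discrete $L$ and $P$, expand both independence statements with the law of total probability, and verify by hand that the resulting system has no solutions except the two excluded ones (equal group distributions of pay/labor, or outputs a perfect proxy for labor). That case analysis, not any conceptual leap, is where the real effort lies.
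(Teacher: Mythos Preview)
Your proposal is correct and considerably more detailed than the paper's own proof, which is a single sentence: ``Follows from direct consideration of the fairness in ML impossibility results in \citet{kleinberg2016inherent}.'' The paper treats the theorem as an immediate corollary of the known ML result via the dictionary of Table~\ref{tab:analogs}, whereas you actually unpack the argument using the graphoid axioms (intersection plus decomposition) and give a self-contained derivation for both pairs. Your route has the advantage of being elementary and not requiring the reader to trace through the Kleinberg--Mullainathan--Raghavan proof and verify the translation; it also makes explicit where the ``imperfect proxy'' and ``different group distributions'' hypotheses are consumed (licensing the intersection axiom and blocking the degenerate solution, respectively), and your observation that $P=f(O)$ makes $P\perp L\mid O$ automatic is a genuine simplification of the second pair that the paper does not mention. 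The paper's approach buys brevity and situates the result in the existing ML-fairness literature; yours buys transparency and independence from that literature. The only caveat is the one you already flag: tying ``imperfect proxy'' to the positivity condition needed for intersection is a plausible but not airtight identification, and your fallback to the Chouldechova-style discrete expansion is the honest way to close that gap if pressed.
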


\begin{proof} Follows from direct consideration of the fairness in ML impossibility results in \citet{kleinberg2016inherent}.
\end{proof}

Where does this leave us? \citet{darlington1971another} made two relevant observations when considering a similar scenario regarding university admissions. First, the more accurate that our measurement of outputs is, the ``closer'' we can get to achieving the incompatible conditions listed in above. (And conversely, if measurements are extremely noisy then one independence condition or the other---or both!---must be violated to a large degree.) Second, one possible resolution that Darlington suggests is to poll people in order to determine socially acceptable trade offs. In the following section, we describe an experimental design aimed at precisely this.

\section{Proposed Experiment}
\label{section:experiment}

\subsection{Overview}

Participants are invited to perform a set of tasks in exchange for a base payment - they are told upfront that they may receive a bonus later on, and the bonus could depend on some combination of output quantity and quality. In our particular experiment, we have set up a standard transcription task for participants to perform; the task itself could theoretically be anything, so long as there is something that results in work that varies by quality and quantity. The goal of having participants complete a task is to then give the same group the ability to vote on a bonus payment structure. It is important that the group voting has actually done some work, so that we can understand the group’s actual preferences instead of hypothetical (stated) preferences

\subsection{Specific Task}
Our specific task involves transcription short images of handwritten lines through Amazon Mechanical Turk. Participants are given one hour to complete the following:
\begin{itemize}
    \item Answer questions about their demographic background 
    \item Read through instructions and do a few practice transcription tasks 
    \item Transcribe as many handwritten lines to the best of their ability. See an example transcription task below  (Figure \ref{fig:transcription}).
\end{itemize}

\begin{figure}[ht]
    \centering
    \includegraphics[width=8cm]{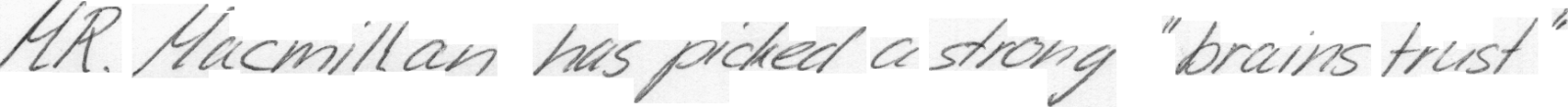}
    \caption{An example transcription task (from IAM handwriting database)}
    \label{fig:transcription}
\end{figure}

\subsection{Data Collection}
Initially, our experiment will run in the United States, though one of our goals for expansion is to do the same experiment in India to get a sense of differences by country. Our plan is to collect 1,000 complete tasks and ask participants to self-report information on the following demographics: age group, geographic region (within the United States), and race/ethnicity. Given that mTurk is not representative of broader populations in the US, we want to use the demographics collected to provide a comparison point against broader population distributions.

\subsection{Voting on a Bonus Payment}
Of the 1,000 participants, we propose a 4-cell treatment (Table \ref{table:treatments}) in asking for bonus payment preferences. We will have a set of bonus payment options and will give participants a series of pairwise choices (example in Figure \ref{fig:choices}) to choose between. 
\begin{itemize}
  \item Group A will be asked for their stated preferences, but the actual bonus payment will be based on their votes.
  \item Group B will be asked for their preferences, and they will be told that one of their votes will be implemented at random to ensure incentive compatibility (random serial dictatorship model). 
  \item Within Group A and Group B, half of each group will receive some information on how each bonus structure would actually map to the distribution of work done by all participants (e.g. min, max, and mean payment amounts)
  \item The other half of Group A and Group B will receive only the bonus structure to vote on. (See Figure \ref{fig:choices} for an example.)
\end{itemize}

\begin{table}[ht]
    \centering
    \scriptsize
    \begin{tabular}{c|cc}
                            & No distribution info  &  Some distribution info \\
                            \hline
        Hypothetical voting & Group A1 ($n=250$)          &  Group A2 ($n=250$) \\
        Actual voting      & Group B1 ($n=250$)          &   Group B2 ($n=250$) 
    \end{tabular}
    \caption{An overview of treatment groups.}
    \label{table:treatments}
\end{table}

\begin{figure}[ht]
    \centering
    \includegraphics[width=8cm]
    {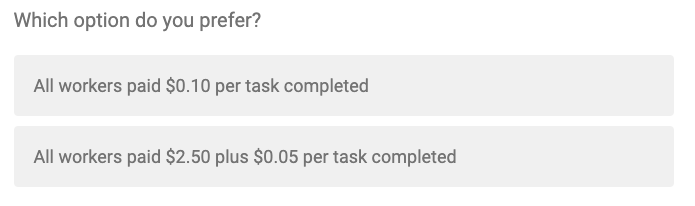}
    \caption{An example pairwise choice presented to participants}
    \label{fig:choices}
\end{figure}

\section{Planned Analysis}
\label{section:analysis}
Ideally, we would like to know people's preferences regarding model deployment impacts {\em before} the model gets deployed. However, prior to the effects of deployment being felt, the best we can do is gather {\em hypothetical} preferences regarding if the model {\em were to be deployed} in a certain manner. Although such hypothetical responses are easy to collect, they are only noisy estimates of real (post-deployment) preferences \citep[see, e.g.,][ regarding privacy preferences]{tan2018comparing}.

Our approach is therefore to use estimation techniques to predict preferences concerning new policies before they are actually implemented. In the case of our experiment, we will have collected hypothetical and actual preferences so can also compare the estimates with actual data.

Predicting actual preferences from hypothetical preferences will involve two main steps \cite{bernheim2013non}.
First, we train a regression model which predicts real bonus allocation choices---under a ``no information'' condition---from hypothetical bonus allocation choices (under the same condition).
Then, we use this model to predict real choices under the ``some information'' condition, using hypothetical choices under the same condition

\section{Discussion and Future Work}
As discussed in Section~\ref{section:fairpay}, there are strong structural correspondences between pay/work and ML. Thus the experiment proposed in Section~\ref{section:experiment} enables us to indirectly investigate preferences concerning the latter using concrete and easily comprehensible questions about preferences regarding the former, using the analyses in Section~\ref{section:analysis}. However, fairness is highly contextual, and we are careful not to claim that fairness concerns from one domain can simply be ``ported'' to another. Further domain-specific work, both pre- and post-deployment, will always be required when extrapolating stated preferences to actual preferences in different ML application domains.

Our approach aims to aggregate divergent interpretations of constructs like ``fairness'', but more work is required to bridge this line of preference-based work with approaches to fairness estimation based on techniques from measurement modeling \cite{jacobs2019measurement}. Beyond measurement, there is also the question of how to best combine stakeholder preferences in a broad ecosystem. Consider the stakeholders in a typical ecosystem in which ML is deployed, such as a social media platform: content creators, content consumers, advertisers, platform owners, etc. Each of these groups has different objectives and considerations, including different lenses on what is relevant to fairness. We might consider, for example, fairness for different groups of creators, for different groups of consumers, for different groups of advertisers, etc. And although much has been written about the incompatibilities of different technical definitions of ``fairness'' \cite{kleinberg2016inherent, chouldechova2017fair, verma2018}, little has been written about the incompatibilities of perspectives of the multitude of stakeholder roles. We propose examining the incentives for one particular group of stakeholders in this experiment, but in the future would seek to connect this to the broader system of stakeholders in a typical ML ecosystem. 

Lastly, there is also a value-laden question of how to incorporate different stakeholders into the design and creation of ML systems by specific entities with different value systems, which is another area of future work that our proposed experiment can help inform.

% Acknowledgements should only appear in the accepted version.
%\section*{Acknowledgements}

\bibliography{FairnessPreferences}
\bibliographystyle{icml2020}

\end{document}